\documentclass[letterpaper, 10 pt, conference]{ieeeconf}
\IEEEoverridecommandlockouts 
\overrideIEEEmargins         

\usepackage[nocompress]{cite}

\usepackage{amsmath, amsfonts, mathtools}

\usepackage{amsthm}
\usepackage{theoremref}
\usepackage[english]{babel}
\newtheorem{theorem}{Theorem}
\newtheorem{lemma}[theorem]{Lemma}

\newtheorem{prop}[theorem]{Proposition}
\usepackage{verbatim}
\usepackage{romannum} 
\AtBeginDocument{\pagenumbering{arabic}} 

\usepackage{graphicx}
\usepackage{subcaption}
\usepackage[dvipsnames]{xcolor}
\usepackage[normalem]{ulem} 

\usepackage{textcomp}

\usepackage{algorithm}
\usepackage[noend]{algpseudocode}

\usepackage{array}
\usepackage{balance}
\usepackage{adjustbox}
\usepackage{multicol}
\usepackage{multirow}
\usepackage{makecell}

\usepackage{stfloats}

\usepackage{url}
\makeatletter
\let\NAT@parse\undefined
\makeatother
\usepackage[colorlinks=true]{hyperref}

\hyphenation{op-tical net-works semi-conduc-tor IEEE-Xplore}
\def\BibTeX{{\rm B\kern-.05em{\sc i\kern-.025em b}\kern-.08em
    T\kern-.1667em\lower.7ex\hbox{E}\kern-.125emX}}

\captionsetup[figure]{font=footnotesize}
\captionsetup[table]{name={TABLE},labelsep=newline,font=footnotesize,labelfont=footnotesize,textfont=sc,justification=centering}

\newcommand{\bx}{\boldsymbol{x}}
\newcommand{\bmu}{\boldsymbol{\mu}}
\newcommand{\bbeta}{\boldsymbol{\beta}}
\newcommand{\balpha}{\boldsymbol{\alpha}}

\usepackage{background}
\backgroundsetup{
    placement=top,
    contents={\parbox{1.2\textwidth}{
    This paper has been accepted for publication in the IEEE Robotics and Automation Letters.\\
    Please cite the paper as: 
    Chen, B. S., Lin, Y. K., Chen, J. Y., Huang, C. W., Chern, J. L., Sun, C. C. (2024). \textit{FracGM: A Fast Fractional Programming Technique for Geman-McClure Robust Estimator.} IEEE Robotics and Automation Letters, 9(12), 11666-11673. DOI: 10.1109/LRA.2024.3495372.
    }},
    color=NavyBlue,
    opacity=1,
    scale=0.84,
    vshift=-32,
}

\begin{document}

\title{
FracGM: A Fast Fractional Programming Technique for Geman-McClure Robust Estimator}

\author{Bang-Shien Chen$^{1}$, Yu-Kai Lin$^{2}$, Jian-Yu Chen$^{3}$, Chih-Wei Huang$^{3}$, Jann-Long Chern$^{1}$, and Ching-Cherng Sun$^{4}$
\thanks{
This work is supported by National Science and Technology Council, Taiwan (NSTC 113-2218-E-008-011-MBK, NSTC 110-2115-M-003-019-MY3).
\textit{(Corresponding author: Chih-Wei Huang.)}
}
\thanks{$^{1}$Bang-Shien Chen and Jann-Long Chern are with the Department of Mathematics, National Taiwan Normal University, Taipei 106209, Taiwan. (e-mail: dgbshien@gmail.com; chern@math.ntnu.edu.tw).}
\thanks{$^{2}$Yu-Kai Lin is with the MediaTek Inc., Hsinchu 30078, Taiwan. (e-mail: stephen.lin@mediatek.com).}
\thanks{$^{3}$Jian-Yu Chen and Chih-Wei Huang are with the Department of Communication Engineering, National Central University, Taoyuan 320317, Taiwan. (e-mail: 111523039@cc.ncu.edu.tw; cwhuang@ce.ncu.edu.tw).}
\thanks{$^{4}$Ching-Cherng Sun is with the Department of Optics and Photonics, National Central University,
Taoyuan 320317, Taiwan. (e-mail: ccsun@dop.ncu.edu.tw).}
\thanks{Digital Object Identifier (DOI): see top of this page.}
}

\maketitle
\thispagestyle{empty}
\pagestyle{empty}

\begin{abstract}
Robust estimation is essential in computer vision, robotics, and navigation, aiming to minimize the impact of outlier measurements for improved accuracy. We present a fast algorithm for Geman-McClure robust estimation, FracGM, leveraging fractional programming techniques. This solver reformulates the original non-convex fractional problem to a convex dual problem and a linear equation system, iteratively solving them in an alternating optimization pattern. Compared to graduated non-convexity approaches, this strategy exhibits a faster convergence rate and better outlier rejection capability. In addition, the global optimality of the proposed solver can be guaranteed under given conditions. We demonstrate the proposed FracGM solver with Wahba's rotation problem and 3-D point-cloud registration along with relaxation pre-processing and projection post-processing. Compared to state-of-the-art algorithms, when the outlier rates increase from 20\% to 80\%, FracGM shows 53\% and 88\% lower rotation and translation increases. In real-world scenarios, FracGM achieves better results in 13 out of 18 outcomes, while having a 19.43\% improvement in the computation time.
\end{abstract}

\section{Introduction}
Robust estimation is an essential technique in real-world applications, including computer vision, robotics, and navigation. Its main objective is to mitigate the impact of outlier measurements, thereby improving the accuracy of the estimation. Recent studies have used various types of robust functions in different applications, such as rotation averaging~\cite{CG13, CG17, HAT11, WS13}, point cloud registration~\cite{DYD21, FA03, LDH19, MZT13, YSC21}, pose graph optimization~\cite{CC18, LFP13, SP12}, and satellite navigation~\cite{JMM23, JS24}.

\begin{figure}[htbp]
    \centering
    \includegraphics[scale=0.192]{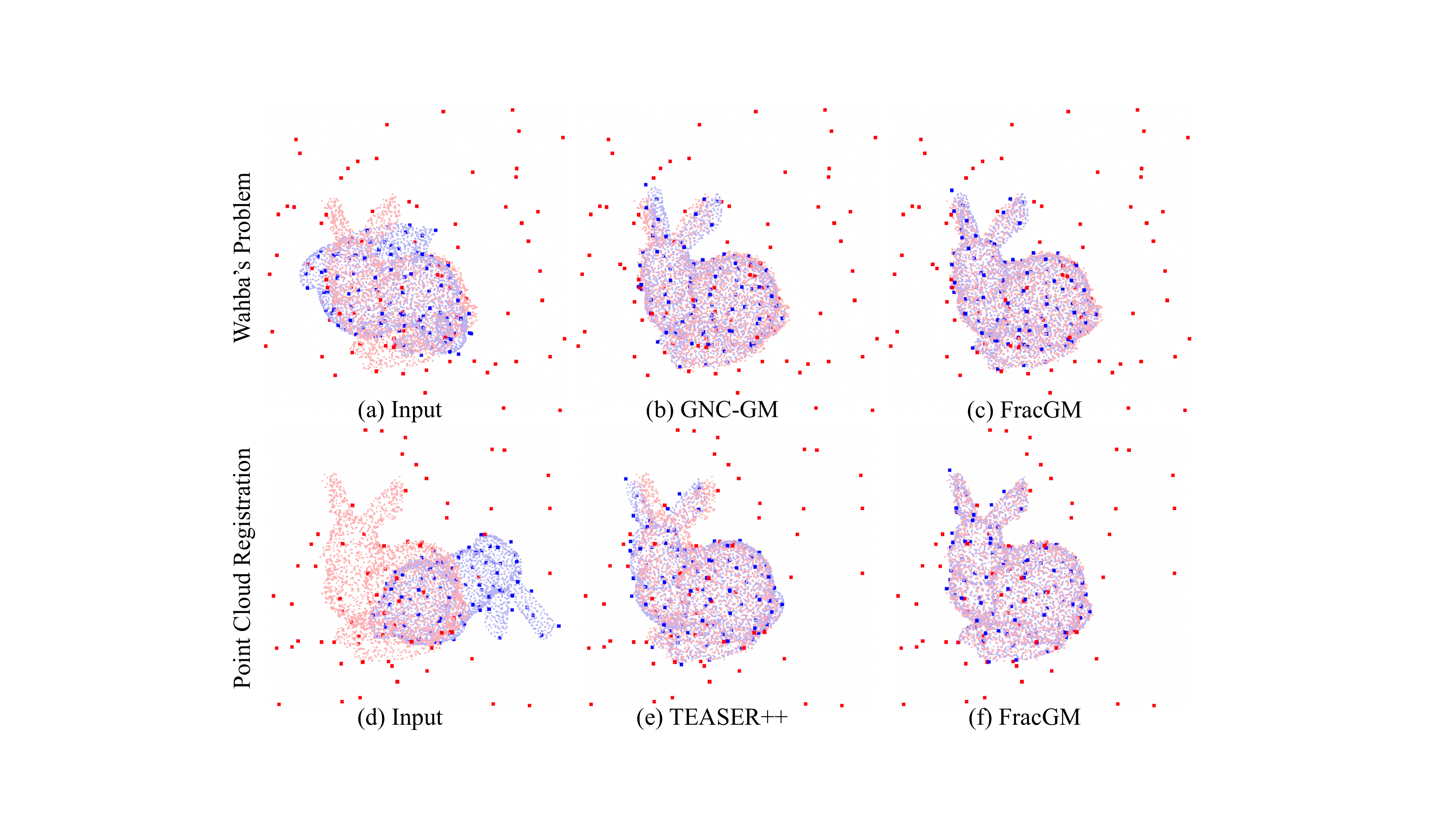}
    \caption{3-D registration of the Stanford Bunny dataset, with $N=100$ points of the source set in {\color{blue}blue} and target set in {\color{red}red} with 80\% outliers, while the light-colored points are only for visualization purpose. Our proposed FracGM for rotation estimation and point cloud registration is more accurate than state-of-the-art methods.} 
    \label{fig_bunny}
    \vspace{-0.6cm}
\end{figure}

Introducing robust functions into estimation tasks usually generates additional non-convexity to the corresponding optimization problem. This major drawback poses a challenge to the convergence and global optimality guarantees of robust estimators. One strategy aims to iteratively update the auxiliary parameters (mainly known as \textit{weights}) and solve the corresponding weighted least squares problem~\cite{BAG16, LWY19, YAT20, Z14}. It claims to be fast, accurate, and empirically practical for large-scale applications but lacks theoretical support for convergence and global optimality. In certain cases, trust-region methods are also required to stabilize the optimization process~\cite{FM17, RKL14}. Another strategy, especially for spatial perception, utilizes \textit{relaxation} techniques to obtain globally optimal solutions to relevant dual problems. Global optimality is guaranteed by convex optimization for dual problems, as well as the tightness between the original and dual problems~\cite{CC18, CP15, LDH19, YC19, YSC21}. However, most approaches rely on large semidefinite programming~(SDP), which results in computational inefficiency for large-scale applications. For example, the \textit{binary cloning} technique introduced in TEASER~\cite{YC19, YSC21} will increase the problem dimension of SDP by the number of measurements.

Designing an efficient robust estimator for real-world applications while justifying global optimality becomes essential and challenging.
Fortunately, we noticed that robust estimation with Geman-McClure~(GM) robust function is a special case of sum-of-ratio problems, in which its convergence and global optimality using fractional programming have been widely discussed~\cite{D67, J12, S97, SS03}.
To the best of our knowledge, most studies suppose that the numerators are convex and the denominators are concave.
This violates the assumption of the Geman-McClure robust function, as both numerators and denominators are typically convex.
To this end, we proposed a fast \textbf{Frac}tional programming technique for \textbf{G}eman-\textbf{M}cClure robust estimator (FracGM) by extending Jong's approach~\cite{J12}.
In particular, we prove that the corresponding dual problems of Geman-McClure robust estimation problems are convex during the iterative optimization process.
The benefits are threefold. First, it ensures that each iteration step of our solver is well-posed. Second, the final solutions of our solver correspond to the primary Geman-McClure robust estimation problem. Third, it allows us to study the conditions that make the proposed solver's solution globally optimal.
For the sake of completeness, we revisit Jong's approach~\cite{J12}, and establish theoretical support for the solver proposed in this paper.

We show the robustness of FracGM on Wahba's rotation problem\cite{W65} and 3-D point cloud registration, as shown in Figure~\ref{fig_bunny}. Both problems introduce an additional challenge for the non-convexity of the 3-D rotation group $\text{SO}(3)$. In this work, we simply relax the feasible space from $\text{SO}(3)$ to $\mathbb{R}^{3\times 3}$, and post-process the matrix with the orthogonal Procrustes problem. Despite breaking the statement of global optimality in FracGM, empirically the proposed method demonstrates better robustness and accuracy than most state-of-the-art approaches.
As a rotation solver, even with outlier rates above 90\%, FracGM achieves rotation errors under 1 degree approximately 80\% of the times, outperforming other solvers in a synthetic dataset. 
As a registration solver, when the outlier rates increase from 20\% to 80\%, the rotation and translation errors of FracGM increase by 53\% and 88\% less than those of the TEASER++\cite{YSC21} algorithm. In terms of registration in real-world scenes, FracGM achieves the best results in 13 of 18 outcomes, while having a 19.43\% improvement of computation time compared to TEASER++.
Overall, our main contributions are summarized as follows:
\begin{enumerate}{}{}
\item{We propose a fast Geman-McClure robust estimator with conditionally global optimality guarantees.}
\item{We provide FracGM-based rotation and registration solvers, outperforming existing state-of-the-art methods in both accuracy and robustness.}
\end{enumerate}

\backgroundsetup{contents={}}

\section{Related Work}\label{sec_related_work}

\subsection{Geman-McClure Robust Solvers}

Robust estimation problems are usually reformulated as iterative weighted least squares problems~\cite{BAG16, MB15}.
In Geman-McClure robust estimations, most approaches rely on Black-Rangarajan duality~\cite{BR96}, which is a weighted least squares with an additional regularization term~\cite{B19,YAT20,ZPK16}.
For example, Graduated Non-Convexity~(GNC)~\cite{YAT20} proposes a general-purpose robust solver for both truncated least squares (TLS) and Geman-McClure (GM) robust cost in conjunction with such duality technique, and introduces an approximation optimization process from convexity to non-convexity.
With recent advances of semidefinite relaxation~\cite{FDJ16}, there are also approaches to recast Geman-McClure robust functions to semidefinite programming~(SDP)~\cite{CP15}.

Unlike the robust estimators based on the Black-Rangarajan duality and semidefinite relaxation Geman-McClure, FracGM leverages Dinkelbach's transform~\cite{D67} and the Lagrangian to reformulate the problem as an underlying convex program with additional auxiliary variables.
It was originally introduced in the field of fractional programming optimization~\cite{D67, J12, S97, SS03}, while to the best of our knowledge, FracGM is the first approach to apply such technique for Geman-McClure robust estimation.

\subsection{Robust Estimation for Point Cloud Registration}
Point cloud registration is the process of aligning two point clouds by finding a spatial rigid transformation.
Horn~\cite{H87} first proposes a closed-form solution by decoupling the problem to scale, rotation and translation sub-problems.
With advances of relaxation techniques in recent decades, Olsson and Eriksson~\cite{OE08} apply Lagrangian duality with semidefinite relaxation.
In addition, Briales and Gonzalez-Jimenez~\cite{BG17} introduce orthonormality and determinant constraints such that the relaxation is empirically tight.
Those methods are said to be outlier-free approaches, since they assume that all measurement noises are modeled under ordinary Gaussian distributions.

In real-world applications, however, there are outlier measurements which may interfere with estimation accuracy.
For the purpose of mitigating the impact of outliers, robust estimation becomes a promising strategy in point cloud registration to identify and isolate bad measurements.
Zhou~\textit{et~al.}~\cite{ZPK16} propose the idea to solve robust registration by gradually adjusting the Geman-McClure robust function.
Yang~\textit{et~al.}~\cite{YSC21} develop Truncated least squares Estimation And SEmidefinite Relaxation (TEASER), where they reformulate the problem to large-scale SDP.
They also develop a fast implementation TEASER++ by leveraging GNC rotation solver to prevent large computational cost in SDP.

\section{Fractional Programming for Geman-McClure Robust Estimator}\label{sec_fracgm}

In this section, we provide the proposed fractional programming technique for Geman-McClure robust estimator along with theoretical optimal-guarantees of the solver. In general, a robust estimation problem has the following form~\cite{MB15, YAT20}:
\begin{equation}\label{general_form}
    \begin{aligned}
        \min_{\bx\in\mathbb{R}^d}\ &\sum_{i=1}^N \rho\big( r_i(\bx)\big).
    \end{aligned}
\end{equation}
Here $\rho(\cdot)$ is a robust function with $N$ residual functions $r_i(\cdot)$. Note that least squares problem corresponds to setting the robust function as $\rho(r)=r^2$. Applying the Geman-McClure function,
\begin{equation}\label{eq_gm}
    \rho_c(r)=\frac{c^2r^2}{r^2+c^2},
\end{equation}
Problem~\eqref{general_form} becomes
\begin{equation} \label{gm_general_form}
    \begin{aligned}
        \min_{\bx\in\mathbb{R}^d}\ &\sum_{i=1}^N \frac{c^2 r_i^2(\bx)}{r_i^2(\bx) + c^2}, \\
    \end{aligned}
\end{equation}
where $c > 0$ is a threshold that depicts the shape of robust function. We suppose that of all square residual functions $r_i^2(\cdot)$ are convex and twice continuously differentiable, and also suppose that we have $M$ additional convex constraints $g_j(\boldsymbol{x}),\ j=1,\dots,M$. In this way, Problem~\eqref{gm_general_form} is considered to be the following \textit{sum-of-ratio problem}~\cite{FJ01}:
\begin{equation}\label{eq_qsrp}
    \begin{aligned}
        \min_{\bx\in\mathbb{R}^d}\ &\sum_{i=1}^N\frac{f_i(\bx)}{h_i(\bx)}\\
        \text{s.t. }&g_j(\boldsymbol{x})\leq0,\ j=1,\dots,M,
    \end{aligned}
\end{equation}
in which $f_i(\bx) = c^2 r_i^2(\bx)$ and $h_i(\bx) = r_i^2(\bx) + c^2$ are also both convex and twice continuously differentiable.

We remark that each ratio's numerator and denominator in Jong's sum-of-ratio programming approach~\cite{J12} are required to be convex and concave, respectively.
However, Problem~\eqref{eq_qsrp} does not meet the assumption because both numerator and denominator are convex.
This discrepancy contradicts the theoretical foundation established in \cite{J12} for the existence and global optimality of solutions, which relies on the convexity of the linear combination of $f_i$ and $h_i$ with arbitrary non-negative coefficients.
Nevertheless, we claim that it is still possible to solve the program by introducing additional conditions, thanks to the structure of Geman-McClure function. To fill the theoretical gap, we revisit the method in the following sections, and provide conditions to guarantee theoretical optimality.

\subsection{Sum-of-Ratio Programming}\label{sec_sorp}

We write an equivalent problem for Problem~\eqref{eq_qsrp} by introducing upper bound variable $\bbeta\in\mathbb{R}^N$:
\begin{equation}\label{eq_primal}
    \begin{aligned}
        \min_{\bx\in\mathbb{R}^d}\ &\sum_{i=1}^N\bbeta_i\\
        \text{s.t. }&\frac{f_i(\bx)}{h_i(\bx)}\leq\bbeta_i,\ i=1,\dots,N\\ 
        &g_j(\bx)\leq0,\ j=1,\dots,M.
    \end{aligned}
\end{equation}

With Dinkelbach’s transform~\cite{D67} and Lagrangian, the following \thref{lemma_1} derives an underlying convex problem of the equivalent Problem~\eqref{eq_primal}. It also claims the relationship between the minimizer of Problem~\eqref{eq_qsrp} and newly introduced \textit{auxiliary variables} $\bbeta$ and $\bmu$.

\begin{lemma}[{{Variant of \cite[Lemma 1]{J12}}}]\thlabel{lemma_1}
If $(\bar{\bx},\bar{\bbeta})$ is a solution of Problem~\eqref{eq_primal}, then there exists $\bar{\bmu}\in\mathbb{R}^N$ such that $\bar{\bx}$ is a solution of the following problem for $\bmu=\bar{\bmu}$ and $\bbeta=\bar{\bbeta}$:
\begin{equation}\label{eq_dual}
    \begin{aligned}
        \min_{\bx\in\mathbb{R}^d}\ &\sum_{i=1}^N\bmu_i(f_i(\bx)-\bbeta_ih_i(\bx))\\
        \text{s.t. }&g_j(\bx)\leq0,\ j=1,\dots,M,
    \end{aligned}
\end{equation}
where $\bmu_i>0$ and $c^2>\bbeta_i$. Furthermore, $\bar{\bx}$ also satisfies the following system of equations for $\bmu=\bar{\bmu}$ and $\bbeta=\bar{\bbeta}$:
\begin{align}
    \bmu_i &=\frac{1}{h_i(\bx)},\ i=1,\dots,N, \label{eq_mu} \\
    \bbeta_i &=\frac{f_i(\bx)}{h_i(\bx)},\ i=1,\dots,N. \label{eq_beta}
\end{align}
\end{lemma}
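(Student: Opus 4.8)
The plan is to apply the Karush--Kuhn--Tucker (KKT) machinery to the equivalent Problem~\eqref{eq_primal} and then exploit the special structure of the Geman--McClure function to upgrade the resulting stationarity conditions into global optimality for Problem~\eqref{eq_dual}. First I would observe that since $h_i(\bx)=r_i^2(\bx)+c^2>0$, each ratio constraint $f_i(\bx)/h_i(\bx)\leq\bbeta_i$ is equivalent to $f_i(\bx)-\bbeta_ih_i(\bx)\leq0$, so that Problem~\eqref{eq_primal} is a smooth program in the variables $(\bx,\bbeta)$ with objective $\sum_i\bbeta_i$ and constraints $f_i(\bx)-\bbeta_ih_i(\bx)\leq0$ and $g_j(\bx)\leq0$. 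Forming the Lagrangian with multipliers $\bmu_i\geq0$ for the first family and $\lambda_j\geq0$ for the second, I would invoke the KKT necessary conditions at the optimizer $(\bar{\bx},\bar{\bbeta})$ under a standard constraint qualification, which also supplies the asserted $\bar{\bmu}$.

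The conditions then unpack cleanly. Stationarity in each $\bbeta_i$ reads $1-\bmu_ih_i(\bar{\bx})=0$, which simultaneously produces the multiplier $\bar{\bmu}_i=1/h_i(\bar{\bx})$ and forces $\bar{\bmu}_i>0$ because $h_i>0$; this is exactly equation~\eqref{eq_mu}. With $\bar{\bmu}_i>0$, the complementary slackness relation $\bar{\bmu}_i(f_i(\bar{\bx})-\bar{\bbeta}_ih_i(\bar{\bx}))=0$ collapses to $\bar{\bbeta}_i=f_i(\bar{\bx})/h_i(\bar{\bx})$, i.e.\ equation~\eqref{eq_beta}. Substituting $f_i=c^2r_i^2$ and $h_i=r_i^2+c^2$ then gives $\bar{\bbeta}_i=c^2r_i^2/(r_i^2+c^2)<c^2$, so the bound $c^2>\bbeta_i$ is obtained automatically rather than imposed as a separate hypothesis.

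The decisive step, and the one where Jong's original argument breaks down, is turning the $\bx$-stationarity into genuine optimality for Problem~\eqref{eq_dual}. Here I would use the bound $\bar{\bbeta}_i<c^2$ just derived: each summand of the dual objective equals $\bar{\bmu}_i\big((c^2-\bar{\bbeta}_i)r_i^2(\bx)-\bar{\bbeta}_ic^2\big)$, and since $\bar{\bmu}_i>0$, $c^2-\bar{\bbeta}_i>0$, and $r_i^2$ is convex, the full objective $\sum_i\bar{\bmu}_i(f_i(\bx)-\bar{\bbeta}_ih_i(\bx))$ is convex. As the $g_j$ are convex as well, Problem~\eqref{eq_dual} with $\bmu=\bar{\bmu}$ and $\bbeta=\bar{\bbeta}$ is a convex program, for which KKT is sufficient for global optimality. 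Finally I would check that the remaining KKT data transfer verbatim: the $\bx$-stationarity $\sum_i\bar{\bmu}_i(\nabla f_i-\bar{\bbeta}_i\nabla h_i)+\sum_j\bar{\lambda}_j\nabla g_j=0$ is precisely the stationarity condition for the dual objective, while primal feasibility, dual feasibility $\bar{\lambda}_j\geq0$, and complementary slackness $\bar{\lambda}_jg_j(\bar{\bx})=0$ all carry over. Hence $\bar{\bx}$ satisfies the KKT conditions of the convex Problem~\eqref{eq_dual} and is therefore a global minimizer. The main obstacle is thus not the calculus but securing convexity of the dual objective, which hinges entirely on the Geman--McClure structure through the inequality $\bar{\bbeta}_i<c^2$; a secondary point to treat carefully is justifying the constraint qualification required to apply the KKT necessary conditions to Problem~\eqref{eq_primal}.
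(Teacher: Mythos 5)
Your proof is correct and takes essentially the same route as the paper: identify $(\bar{\bx},\bar{\bbeta},\bar{\bmu})$ as a KKT point of Problem~\eqref{eq_dual} (the step the paper defers to Jong~\cite{J12}), establish convexity of Problem~\eqref{eq_dual} via the same expansion $\sum_{i=1}^N\bmu_i(c^2-\bbeta_i)r_i^2-\bmu_i\bbeta_ic^2$, and conclude by KKT sufficiency. The only real difference is organizational: you derive the bounds $\bar{\bmu}_i>0$ and $c^2>\bar{\bbeta}_i$ directly from the primal KKT system and the Geman--McClure structure (anticipating the argument the paper only gives later, in the proof of \thref{thm}), whereas the lemma as stated simply imposes them as auxiliary variable constraints.
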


\begin{proof}
We begin by providing a high-level overview of the proof. Firstly, $(\bar{\bx},\bar{\bbeta},\bar{\bmu})$ is claimed to be a KKT point of Problem~\eqref{eq_dual}. Secondly, Problem~\eqref{eq_dual} is shown to be a convex program. In this way, by the KKT sufficient conditions, $(\bar{\bx},\bar{\bbeta},\bar{\bmu})$ is an optimal solution of Problem~\eqref{eq_dual}. Please refer to~\cite{J12} for detailed description of the first part.

For the second part, we point out that $h(\bx)$ is convex in our case instead of concave as in~\cite{J12}. Hereby we introduce additional \textit{auxiliary variable constraints} $\bmu_i>0$ and $c^2>\bbeta_i$ in~\thref{lemma_1} in order to make the convexity of Problem~\eqref{eq_dual} hold. To obtain the auxiliary variable constraints, we expand the objective of Problem~\eqref{eq_dual}:
$$
\begin{aligned}
    \sum_{i=1}^N\bmu_i(f_i(\bx)-\bbeta_ih_i(\bx))
    &=\sum_{i=1}^N\bmu_i(c^2-\bbeta_i)r_i^2-\bmu_i\bbeta_ic^2,
\end{aligned}
$$
which implies that the objective of Problem~\eqref{eq_dual} is convex given $\bmu_i>0$ and $c^2>\bbeta_i$. 
\end{proof}

In this way, the primal problem~\eqref{eq_primal} turns out to be finding a feasible solution $\bar{\bx}$ of the dual problem~\eqref{eq_dual} as well as auxiliary variables $(\bar{\bbeta}, \bar{\bmu})$. We resort to \textit{alternating minimization} that iteratively and separately updates $\bx$ and $(\bbeta, \bmu)$. Specifically, in the $k$-th iteration, given a fixed $(\bbeta^k, \bmu^k)$ that satisfies the auxiliary variable constraints $\bmu^{k}_i>0$ and $c^2>\bbeta^{k}_i$, solving $\bx^{k}$ with Problem~\eqref{eq_dual} becomes a convex programming with fixed variable dimension $\mathbb{R}^d$. As for finding $(\bbeta^{k+1}, \bmu^{k+1})$ with given $\bx^k$, a root finding problem is introduced in the next section. It is same as Jong's approach~\cite{J12}, whereas we will additionally claim that the auxiliary variable constraints is theoretically preserved during the optimization.

\subsection{Solving Auxiliary Variables}\label{sec_aux}
Let $\balpha=(\bbeta^\top ,\bmu^\top )^\top \in\mathbb{R}^{2N}$ denote the auxiliary variables,
let $\bx_{\balpha}$ be the solution of problem \eqref{eq_dual} given an $\balpha$, and $r_i(\bx_{\balpha}),\ i=1,\dots,N$ are the corresponding residuals. Define a function $\psi:\mathbb{R}^{2N}\times\mathbb{R}^d\to\mathbb{R}^{2N}$ as follows:
$$
\psi(\balpha,\bx)=
\begin{pmatrix}
-f_1(\bx)+\bbeta_1h_1(\bx)\\
\vdots\\
-f_N(\bx)+\bbeta_Nh_N(\bx)\\
-1+\bmu_1h_1(\bx)\\
\vdots\\
-1+\bmu_Nh_N(\bx)
\end{pmatrix}.
$$

This function is specifically defined by the constraints \eqref{eq_mu} and \eqref{eq_beta} such that $\psi(\bar{\balpha},\bar{\bx})=\mathbf{0}$, where $\bar{\balpha}=(\bar{\bbeta}^\top,\bar{\bmu}^\top)^\top$ and $(\bar{\bx},\bar{\bbeta},\bar{\bmu})$ is the solution of Problem~\eqref{eq_dual}. 
By \cite[Corollary 2.1]{J12}, if $\bar{\bx}$ is a solution of Problem~\eqref{eq_qsrp}, then there exists $\balpha\in\mathbb{R}^{2N}$ such that $\bar{\bx}=\bx_{\balpha}$ and
\begin{equation}\label{eq_psi}
    \psi(\balpha,\bx_{\balpha})=\mathbf{0}.
\end{equation} 
This shows that a solution of the auxiliary variables $\balpha$ can be obtained by solving $\psi(\balpha,\bx_{\balpha})=\mathbf{0}$. 
\thref{thm} next states that such solution satisfies the KKT conditions in \thref{lemma_1}.

\begin{theorem}\thlabel{thm}
If $(\bar{\balpha},\bx_{\bar{\balpha}})$ is a solution for Problem~\eqref{eq_psi}, and let $\bar{\balpha}=(\bar{\bbeta}^\top ,\bar{\bmu}^\top)^\top$, then $\bar{\bmu}_i>0$ and $c^2>\bar{\bbeta}_i,\ i=1,\dots,N$. Furthermore, Problem~\eqref{eq_dual} has a solution where $(\bx,\bbeta,\bmu) = (\bx_{\bar{\balpha}},\bar{\bmu},\bar{\bbeta})$.
\end{theorem}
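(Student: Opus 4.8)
The plan is to prove the two assertions in turn: first the auxiliary variable constraints $\bar{\bmu}_i>0$ and $c^2>\bar{\bbeta}_i$, and only then the optimality of the triple for Problem~\eqref{eq_dual}. The first assertion is the genuinely new ingredient — it is exactly the obstruction flagged after Problem~\eqref{eq_qsrp}, namely that both numerator and denominator of each ratio are convex, so Jong's concavity hypothesis fails and the constraints cannot simply be assumed. The second assertion then follows almost immediately by feeding the verified constraints into \thref{lemma_1}.

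For the first part, I would start from the hypothesis $\psi(\bar{\balpha},\bx_{\bar{\balpha}})=\mathbf{0}$ and read off its two blocks of scalar equations. The bottom $N$ components give $\bar{\bmu}_ih_i(\bx_{\bar{\balpha}})=1$, i.e. $\bar{\bmu}_i=1/h_i(\bx_{\bar{\balpha}})$, and the top $N$ components give $\bar{\bbeta}_ih_i(\bx_{\bar{\balpha}})=f_i(\bx_{\bar{\balpha}})$, i.e. $\bar{\bbeta}_i=f_i(\bx_{\bar{\balpha}})/h_i(\bx_{\bar{\balpha}})$; these are precisely \eqref{eq_mu} and \eqref{eq_beta} evaluated at $\bx_{\bar{\balpha}}$. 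Substituting the Geman-McClure forms $f_i=c^2r_i^2$ and $h_i=r_i^2+c^2$ then does all the work: since $r_i^2(\bx_{\bar{\balpha}})\geq0$ and $c>0$, the denominator $h_i(\bx_{\bar{\balpha}})=r_i^2(\bx_{\bar{\balpha}})+c^2$ is strictly positive, so $\bar{\bmu}_i=1/h_i(\bx_{\bar{\balpha}})>0$. For the bound on $\bar{\bbeta}_i$ I would compute the gap directly, $c^2-\bar{\bbeta}_i=c^2-\frac{c^2r_i^2}{r_i^2+c^2}=\frac{c^4}{r_i^2+c^2}>0$, which gives $c^2>\bar{\bbeta}_i$.

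For the second part, the constraints just established are exactly the hypotheses of \thref{lemma_1}. With $\bar{\bmu}_i>0$ and $c^2>\bar{\bbeta}_i$ in hand, the objective of \eqref{eq_dual} expands (as in the proof of \thref{lemma_1}) to $\sum_{i=1}^N\bar{\bmu}_i(c^2-\bar{\bbeta}_i)r_i^2-\bar{\bmu}_i\bar{\bbeta}_ic^2$, a nonnegative combination of the convex functions $r_i^2$ plus a constant, hence convex; together with the convex constraints $g_j\leq0$ this makes \eqref{eq_dual} a convex program at $(\bbeta,\bmu)=(\bar{\bbeta},\bar{\bmu})$. Since $\bx_{\bar{\balpha}}$ is by definition the solution of \eqref{eq_dual} for $\balpha=\bar{\balpha}$, convexity promotes it to a global minimizer, and I would conclude that $(\bx,\bbeta,\bmu)=(\bx_{\bar{\balpha}},\bar{\bbeta},\bar{\bmu})$ solves Problem~\eqref{eq_dual}.

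The main obstacle is conceptual rather than computational and lives entirely in the first part: one must certify the sign and bound conditions \emph{before} the convexity machinery of \thref{lemma_1} can be invoked, because for an arbitrary $\balpha$ the coefficient $\bar{\bmu}_i(c^2-\bar{\bbeta}_i)$ could be non-positive and \eqref{eq_dual} non-convex, leaving ``the solution $\bx_{\bar{\balpha}}$'' ill-posed. The crux is therefore the observation that the specific algebraic shape of the Geman-McClure ratio forces $0\leq\bar{\bbeta}_i<c^2$ automatically at any root of $\psi$; once this is seen, everything else is routine substitution and an appeal to \thref{lemma_1}.
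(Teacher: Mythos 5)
Your proposal is correct and follows essentially the same route as the paper: derive $\bar{\bmu}_i>0$ and $c^2>\bar{\bbeta}_i$ from the root equations of $\psi$ together with the Geman--McClure structure, then invoke \thref{lemma_1} (whose convexity argument you re-derive inline) to conclude that $(\bx_{\bar{\balpha}},\bar{\bbeta},\bar{\bmu})$ solves Problem~\eqref{eq_dual}. The only cosmetic difference is that you obtain the bound on $\bar{\bbeta}_i$ by directly computing $c^2-\bar{\bbeta}_i=c^4/(r_i^2+c^2)>0$, whereas the paper argues by contradiction from $c^2\leq\bar{\bbeta}_i$; the two are equivalent.
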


\begin{proof}
By \eqref{eq_mu}, $\bar{\bmu}_i>0$ since $h_i(\bx_{\bar{\balpha}})>0$. If $c^2\leq\bar{\bbeta}_i$, then
$$
c^2\leq\frac{c^2r_i(\bx_{\bar{\balpha}})^2}{r_i(\bx_{\bar{\balpha}})^2+c^2}
\implies
r_i(\bx_{\bar{\balpha}})^2+c^2\leq r_i(\bx_{\bar{\balpha}})^2.
$$
The above inequality yields $c^2\leq0$, which contradicts $c$'s definition. Therefore, we conclude $c^2>\bar{\bbeta}_i$. It follows from \thref{lemma_1} that $\bx_{\bar{\balpha}}$ is a solution of Problem~\eqref{eq_dual} which satisfies the KKT conditions.
\end{proof}

\thref{thm} provides an assurance that all $\bx_{\bar{\balpha}}$ such that $\psi(\bar{\balpha}, \bx_{\bar{\balpha}})=\mathbf{0}$ is a solution candidate for Problem~\eqref{eq_qsrp} that satisfies necessary optimality conditions, and also guarantees that we can indeed use Jong's approach~\cite{J12} to obtain a solution in our case where numerators and denominators of the fractional program are all convex.
To this end, given a fixed $\balpha^k=((\bbeta^k)^\top,(\bmu^k)^\top)^\top$ that satisfies the auxiliary variable constraints, we solve the convex program~\eqref{eq_dual} and obtain $\bx^k=\bx_{\balpha^k}$. If $\psi(\balpha^k,\bx^k)=\mathbf{0}$, by \thref{thm}, FracGM converges to a local solution of Problem~\eqref{eq_qsrp}. Otherwise, we update $\balpha$ by solving $\psi(\balpha,\bx^k)=\mathbf{0}$, where $\bx^k=\bx_{\balpha^k}$ is the fixed solution of Problem~\eqref{eq_dual} from the previous iteration in the sense of alternating minimization. Clearly, Problem~\eqref{eq_psi} is a linear system with fixed $\bx^k$, we write the \textit{closed-form} solution $\balpha^{k+1}$ component-wise as follows:
\begin{equation}
    \bbeta_i^{k+1}=\dfrac{f_i(\bx^k)}{h_i(\bx^k)},\ 
    \bmu_i^{k+1}=\dfrac{1}{h_i(\bx^k)}, \ i=1,\dots,N. \label{eq_newton}
\end{equation}

In summary, we iteratively update $\balpha^k$ by Equation~\eqref{eq_newton} and solve $\bx^k$ by Problem~\eqref{eq_dual} until the \textit{stopping criteria:} $\psi(\balpha^{k},\bx^k)=\mathbf{0}$. 
Note that during the $k^\prime$-th iteration that hasn't converge yet, $\balpha^{k^\prime}$ is the solution of $\psi(\balpha,\bx^{k^\prime-1})=\mathbf{0}$ from the previous iteration, however, not the solution of $\psi(\balpha^{k^\prime},\bx^{k^\prime})=\mathbf{0}$ in the current iteration. This stopping criteria can also be interpreted as finding an $\balpha^k$ such that both Problem~\eqref{eq_dual} and Problem~\eqref{eq_psi} are solved.
The complete procedure is depicted in Algorithm \ref{alg_fgm}.

\vspace{-0.2cm}
\setlength{\textfloatsep}{2pt}
\begin{algorithm}[htbp]
\caption{FracGM}\label{alg_fgm}
\begin{algorithmic}[1]
\Require $\bx^0$: initial guess, $r_i(\bx)$: residual functions
\Ensure $\bx^*$: estimation
\State compute $f_i,h_i$ by \eqref{eq_qsrp} and constraints $g_j$ if necessary
\While{iteration and tolerance condition}
    \State update $\bbeta^k$ and $\bmu^k$ by Equation~\eqref{eq_newton}
    \State update $\balpha^k\gets((\bbeta^k)^\top ,(\bmu^k)^\top )^\top $
    \State solve the convex programming \eqref{eq_dual}:
    \vspace{-0.12cm}
    $$
    \begin{aligned}
    \bx^k=\min_{\bx\in\mathbb{R}^d}\ &\sum_{i=1}^N\bmu_i^k(f_i(\bx)-\bbeta_i^kh_i(\bx))\\
    \text{s.t. }&g_j(\bx)\leq0,\ j=1,\dots,M\\
    \end{aligned}
    $$
    \vspace{-0.12cm}
    \If{$\psi(\balpha^k,\bx^k)=\mathbf{0}$} $\bx^*\gets \bx^k$ and break
    \EndIf
\EndWhile
\Return $\bx^*$
\end{algorithmic}
\end{algorithm}
\vspace{-0.4cm}

\subsection{Global Optimality of FracGM}\label{sec_global}

In Section~\ref{sec_sorp} and \ref{sec_aux}, we have shown that the solution of FracGM is a KKT point of Problem~\ref{eq_dual} by \thref{thm}, which is a local solution of Problem~\eqref{eq_primal} by \thref{lemma_1}. It follows from \cite[Corollary 2.1]{J12} that if Problem~\eqref{eq_psi} has only one solution, then such unique solution is the optimal solution of Problem~\eqref{eq_primal}. Consequently, we derive the following statement from the uniqueness of Problem~\eqref{eq_psi} {\cite[Theorem 3.1]{J12}}:

\begin{prop}\thlabel{prop}
Suppose that the global solution of Problem~\eqref{eq_qsrp} exists.
If $\psi(\balpha,\bx_{\balpha})$ is differentiable and Lipschitz continuous in $\mathbb{R}^{2N}$, then FracGM is a global solver.
\end{prop}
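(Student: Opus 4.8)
The plan is to reduce the global-optimality claim to a uniqueness statement for the root-finding Problem~\eqref{eq_psi}, and then to identify that unique root with the global minimizer. The argument chains together three ingredients that are already available: the forward correspondence of \cite[Corollary 2.1]{J12} between minimizers of Problem~\eqref{eq_qsrp} and roots of $\psi$, the uniqueness criterion of \cite[Theorem 3.1]{J12}, and the fact from \thref{thm} that FracGM returns a root of $\psi$ lying in the admissible region $\{\bmu_i>0,\ c^2>\bbeta_i\}$.

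First I would record existence at the level of $\psi$. Since a global solution $\bar{\bx}$ of Problem~\eqref{eq_qsrp} is assumed to exist, the forward direction of \cite[Corollary 2.1]{J12} produces an $\balpha^\star\in\mathbb{R}^{2N}$ with $\bar{\bx}=\bx_{\balpha^\star}$ and $\psi(\balpha^\star,\bx_{\balpha^\star})=\mathbf{0}$. Thus Problem~\eqref{eq_psi} has at least one solution, and that particular solution encodes the global optimum.

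Next I would invoke uniqueness and close the loop. The hypotheses that $\balpha\mapsto\psi(\balpha,\bx_{\balpha})$ is differentiable and Lipschitz continuous on $\mathbb{R}^{2N}$ are precisely the conditions under which \cite[Theorem 3.1]{J12} asserts that $\psi(\balpha,\bx_{\balpha})=\mathbf{0}$ has a single root; applying it gives that $\balpha^\star$ is the only solution of Problem~\eqref{eq_psi}. By \thref{thm}, any $\bar{\balpha}$ returned by FracGM satisfies $\psi(\bar{\balpha},\bx_{\bar{\balpha}})=\mathbf{0}$, so $\bar{\balpha}$ is a root of Problem~\eqref{eq_psi}; uniqueness then forces $\bar{\balpha}=\balpha^\star$, whence $\bx_{\bar{\balpha}}=\bar{\bx}$ is the global minimizer of Problem~\eqref{eq_qsrp} (equivalently Problem~\eqref{eq_primal}), and FracGM is a global solver.

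The main obstacle is the uniqueness step, namely transferring \cite[Theorem 3.1]{J12} to the present Geman-McClure setting. Jong's uniqueness result was developed for ratios with convex numerators and \emph{concave} denominators, whereas here both $f_i$ and $h_i$ are convex; what rescues the argument is that \thref{thm} already shows every root of $\psi$ automatically lands in the region on which Problem~\eqref{eq_dual} is convex, so each root is a genuine KKT point and the candidate set addressed by \cite[Theorem 3.1]{J12} is unchanged. The remaining care is to confirm that the implicit map $\balpha\mapsto\bx_{\balpha}$ inherits enough regularity for the cited theorem to apply on all of $\mathbb{R}^{2N}$; once differentiability and Lipschitz continuity of $\psi(\balpha,\bx_{\balpha})$ are granted as hypotheses, the rest is bookkeeping that carries Jong's abstract conclusions into our convex-over-convex instance.
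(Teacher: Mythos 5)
Your proposal is correct and takes essentially the same approach as the paper: both reduce global optimality to uniqueness of the root of $\psi$ via \cite[Corollary 2.1]{J12} and \cite[Theorem 3.1]{J12}, combined with \thref{thm} to ensure the root returned by FracGM is an admissible KKT point, so that the unique root must coincide with the global optimum. Your explicit existence-then-uniqueness bookkeeping is simply a more detailed rendering of the paper's terse derivation in Section~\ref{sec_global}.
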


An example of FracGM satisfying~\thref{prop}, i.e., having theoretically global optimal guarantees, is given in the supplementary material.

\section{Spatial Perception with FracGM}\label{sec_fracgm_application}

In this section, we demonstrate two examples of applying the proposed FracGM estimator to practical problems, a FracGM-based rotation solver (Section~\ref{sec_rotation}) and a FracGM-based registration solver (Section~\ref{sec_fgm_reg}). Although only two examples are showed in this work, we reiterate that the proposed FracGM solver can apply to any problem in which the square of residuals are twice continuously differentiable and convex.

\subsection{Robust Rotation Solver with FracGM}\label{sec_rotation}

We first discuss about the Wahba's rotation Problem \cite{W65}, which is basically a registration of two point clouds that only differ by a 3-D rotation. Let $\{\boldsymbol{a}_i\},\{\boldsymbol{b}_i\}$ be two sets of point clouds with $N$ points, source set and target set, where $\boldsymbol{a}_i,\boldsymbol{b}_i\in\mathbb{R}^3$ are 3-D points. The goal is to find a rotation matrix $\mathbf{R}$ between the two point clouds, i.e., $\boldsymbol{b}_i=\mathbf{R}\boldsymbol{a}_i+\boldsymbol{\varepsilon}_i$, with error terms $\boldsymbol{\varepsilon}_i\sim\mathcal{N}(0,\sigma_i^2\mathbf{I}_3)$ following the isotropic Gaussian distribution. The maximum likelihood estimation is equivalent to the following least squares problem:
\begin{equation}\label{eq_rotation}
    \min_{\mathbf{R}\in\text{SO}(3)}\sum_{i=1}^N\frac{1}{\sigma_i^2}\big\|\boldsymbol{b}_i-\mathbf{R}\boldsymbol{a}_i\big\|^2_2.
\end{equation}

We first rewrite it as a quadratic program:
\begin{equation}\label{eq_rotation2}
    \text{Problem }\eqref{eq_rotation}
    \iff
    \begin{aligned}
        \min_{\boldsymbol{x}\in\mathbb{R}^{10}}\ &\sum_{i=1}^N\bx^\top (\frac{1}{\sigma_i^2}\mathbf{M}_i)\bx\\
        \text{s.t.}\ &\mathbf{R}\in\text{SO}(3) \text{ and } \boldsymbol{e}_{10}^\top \boldsymbol{x}=1
    \end{aligned}
\end{equation}
where $\mathbf{M}_i=[\boldsymbol{a}_i^\top \otimes\mathbf{I}_3,-\boldsymbol{b}]^\top [\boldsymbol{a}_i^\top \otimes \mathbf{I}_3,-\boldsymbol{b}]\in\mathbb{R}^{10\times10}$, $\boldsymbol{x}=[\text{vec}(\mathbf{R})^\top ,1]^\top \in\mathbb{R}^{10}$, $\text{vec}(\mathbf{R})\in\mathbb{R}^9$ is the column-wise vectorization of the rotation matrix $\mathbf{R}$, and $\boldsymbol{e}_{10}=(0,\dots,0,1)^\top\in\mathbb{R}^{10}$. This vectorization and homogenization technique is commonly used~\cite{BG17, OE08}.
We also remark that all square of residual functions $r_i^2(\bx) = \bx^\top (\frac{1}{\sigma_i^2}\mathbf{M}_i)\bx$ defined in Problem~\eqref{eq_rotation2} are twice continuously differentiable and \textit{quadratic} convex. Next, we apply the GM robust function~\eqref{eq_gm} and form the following optimization problem:
\begin{equation}\label{eq_gm_rotation}
    \begin{aligned}
        \min_{\bx\in\mathbb{R}^{10}}\ &\sum^N_{i=1}\frac{c^2\bx^\top (\frac{1}{\sigma_i^2}\mathbf{M}_i)\bx}{\bx^\top(\frac{1}{\sigma_i^2}\mathbf{M}_i)\bx+c^2}\\
        \text{s.t.}\ &\mathbf{R}\in\text{SO}(3) \text{ and } \boldsymbol{e}_{10}^\top \bx=1
    \end{aligned}
\end{equation}

In order to apply FracGM, we relax the $\text{SO}(3)$ constraints to align with the form of Problem~\eqref{eq_qsrp}. Since the numerators and denominators in the relaxed Problem \eqref{eq_gm_rotation} are quadratic,
the derived Problem~\eqref{eq_dual} is a quadratic program with only one equality constraint. Such problem has a \textit{closed-form} solution by Shur complement and the KKT matrix \cite{BV04}:
\begin{equation}\label{eq_closed}
    \bar{\bx}=\frac{\mathbf{A}^{-1}\boldsymbol{e}_{10}}{\boldsymbol{e}_{10}^\top\mathbf{A}^{-1}\boldsymbol{e}_{10}},\
\mathbf{A}=\sum_{i=1}^N\bmu_i(c^2-\bbeta_i)(\frac{1}{\sigma_i^2}\mathbf{M}_i).
\end{equation}

After finding a solution $\bx^*$ by FracGM, we dehomogenize $\bar{\boldsymbol{x}}\in\mathbb{R}^{10}$ to $\bar{\bx}^\prime\in\mathbb{R}^9$, then reshape it to a matrix $\mathbf{R}^\prime\in\mathbb{R}^{3\times3}$. Since the solution is on the relaxed convex space and we are agnostic on theoretical conditions of tightness, we forcibly project it back to the special orthogonal space by SVD \cite{S66}. This technique is also commonly used after linear relaxation in rotation estimation~\cite{BG17, OE08}. 
In light of an initial guess is needed for FracGM, we simply use the fast closed-form solution \cite{H87} for Problem~\eqref{eq_rotation}. Despite that this closed-form solution does not tolerate outliers, we will show that it is proficient even when there are numerous outliers in Section \ref{sec_experiments}. We summarize the FracGM-based rotation solver in Algorithm~\ref{alg_rotation}.

\vspace{-0.2cm}
\setlength{\textfloatsep}{0pt}
\begin{algorithm}[htbp]
\caption{FracGM-based rotation solver}\label{alg_rotation}
\begin{algorithmic}[1]
\Require $\mathbf{R}^0$: initial guess, $\boldsymbol{a}_i,\boldsymbol{b}_i$: point clouds
\Ensure $\mathbf{R}^*$: estimated rotation
\State rewrite $\mathbf{R}^0\in\text{SO}(3)$ to $\bx^0\in\mathbb{R}^{10}$
\State compute the residual functions $r_i(\bx)$ by \eqref{eq_rotation2}
\State compute $\bx^*\in\mathbb{R}^{10}$ by Algorithm 1
\State rewrite $\bx^*\in\mathbb{R}^{10}$ to $\mathbf{R}^\prime\in\mathbb{R}^{3\times3}$
\State project $\mathbf{R}^\prime\in\mathbb{R}^{3\times3}$ to $\mathbf{R}^*\in\text{SO}(3)$ by SVD\\
\Return $\mathbf{R}^*$
\end{algorithmic}
\end{algorithm}
\vspace{-0.4cm}

\subsection{Robust Registration Solver with FracGM}\label{sec_fgm_reg}
We extend the FracGM-based rotation solver to a registration solver, which is to find the spatial transformation between the two point clouds, i.e., $\boldsymbol{b}_i=\mathbf{R}\boldsymbol{a}_i+\boldsymbol{t}+\boldsymbol{\varepsilon}_i$, where $\mathbf{R}$ is the rotation matrix, $\boldsymbol{t}$ is the translation vector, and error terms $\boldsymbol{\varepsilon}_i\sim\mathcal{N}(0,\sigma_i^2\mathbf{I}_3)$ follow the isotropic Gaussian distribution. In this way, we get the following problem:
\begin{equation}\label{eq_registration}
    \min_{\mathbf{R}\in\text{SO}(3),\boldsymbol{t}\in\mathbb{R}^3}\sum_{i=1}^N\frac{1}{\sigma_i^2}\big\|\boldsymbol{b}_i-(\mathbf{R}\boldsymbol{a}_i+\boldsymbol{t})\big\|^2_2
\end{equation}

We extend Problem~\eqref{eq_gm_rotation} to solving spatial transformation and the problem formulation of GM robust point cloud registration becomes:
\begin{equation}\label{eq_gm_registration}
    \begin{aligned}
        \min_{\bx\in\mathbb{R}^{13}}\ &\sum^N_{i=1}\frac{c^2\bx^\top (\frac{1}{\sigma_i^2}\mathbf{M}_i)\bx}{\bx^\top (\frac{1}{\sigma_i^2}\mathbf{M}_i)\bx+c^2}\\
        \text{s.t.}\ &\mathbf{R}\in\text{SO}(3) \text{ and } \boldsymbol{e}^\top_{13} \bx=1
    \end{aligned}
\end{equation}
where $\mathbf{M}_i=[\boldsymbol{a}_i^\top \otimes \mathbf{I}_3,\mathbf{I}_3,-\boldsymbol{b}]^\top [\boldsymbol{a}_i^\top \otimes \mathbf{I}_3,\mathbf{I}_3,-\boldsymbol{b}]\in\mathbb{R}^{13\times13}$, $\bx=[\text{vec}(\mathbf{R})^\top ,\boldsymbol{t}^\top ,1]^\top \in\mathbb{R}^{13}$. The structural consistency between Equation~\eqref{eq_gm_rotation} and Equation~\eqref{eq_gm_registration} permits a similar algorithm to solve the registration problem. In particular, the algorithm of FracGM for point cloud registration can be extended from Algorithm \ref{alg_rotation} by adding an initial guess $(\mathbf{R}^0, \boldsymbol{t}^0)$ and solving $\bx^*\in \mathbb{R}^{13}$ in Problem~\eqref{eq_gm_registration}.

It is worth noting that the introduced FracGM-based registration solver directly estimates the relative transformation between two point clouds, whereas TEASER~\cite{YSC21} decouples rotation and translation estimation. This strategy is more computationally efficient, as it avoids the increase in measurement number complexity up to $O(N^2)$ when introducing translation-invariant measurements for rotation solvers. In addition, this computational cost-friendly strategy also improves the accuracy of estimation results. 

We acknowledge that the theoretical guarantee of global optimality for FracGM-based rotation and registration solvers remains an open issue. Specifically, the global optimality is subjected to variable space relaxation tightness and~\thref{prop} for the relaxed program, which could be challenging to verify as it depends on the input data. It requires further investigation to address the issue\footnote{Empirical studies on global optimality including relaxation tightness and sensitivity to initial guesses are available at the supplementary material.}.

\section{Experiments and Applications}\label{sec_experiments}
We evaluate the proposed FracGM solver in rotation estimation and point cloud registration with both synthetic dataset and realistic 3DMatch dataset~\cite{ZSN17}. In the synthetic dataset, we experiment the Stanford Bunny point cloud~\cite{GM94}, manipulate various proportions of points as outliers, and transform it to the target point cloud randomly. Hereby, point correspondences are predefined but noisy. This dataset validates methods' robustness to outlier point correspondences at different levels. Meanwhile, the 3DMatch dataset~\cite{ZSN17} is used to examine the real-world capability of using FracGM in point cloud registration. Point correspondences in this case are obtained by point cloud feature extraction and matching, and therefore the actual noise model and outlier rates are unknown. All experiments are conducted over 40 Monte Carlo runs, and computed on a desktop computer with an Intel i9-13900 CPU and 128GB RAM.

We compare FracGM with various state-of-the-art approaches. In rotation estimation, outlier-free approaches (SVD~\cite{H87}, RCQP~\cite{BG17}) and outlier-awareness approaches (GNC-TLS~\cite{YAT20}, GNC-GM~\cite{YAT20}, TEASER~\cite{YSC21}) are chosen to study the impact of different outlier distributions on performance. In point cloud registration, Fast Point Feature Histograms (FPFH)~\cite{FPFH} are used to extract point features and to perform feature matching with nearest neighbor searching. Outlier-awareness registration solvers (RANSAC~\cite{RANSAC}, FGR~\cite{ZPK16}, TEASER++~\cite{YSC21}) are chosen to evaluate registration accuracy and computational efficiency.

\subsection{Rotation Estimation with Synthetic Dataset}\label{rotation_estimation}

We evaluate the resistance to outliers in different outlier rates of the proposed FracGM rotation solver. In practice, we down sample the Stanford Bunny point cloud to $50$ points for outlier rates between 20\% and 80\%, and to $500$ points for outlier rates higher than 80\%. We construct the corresponding point cloud by adding a Gaussian noise with standard deviation of 0.01, and applying random rotation $\mathbf{R}\in\text{SO}(3)$. We generate outliers by replacing a fraction (outlier rate) of points by random sampled 3-D points inside the sphere of radius 2. The threshold $c$ in robust functions (TLS and GM) are all set to 1.

\begin{figure*}[htbp]
    \centering
    \begin{subfigure}{\textwidth}
        \centering
        \includegraphics[width=0.98\textwidth]{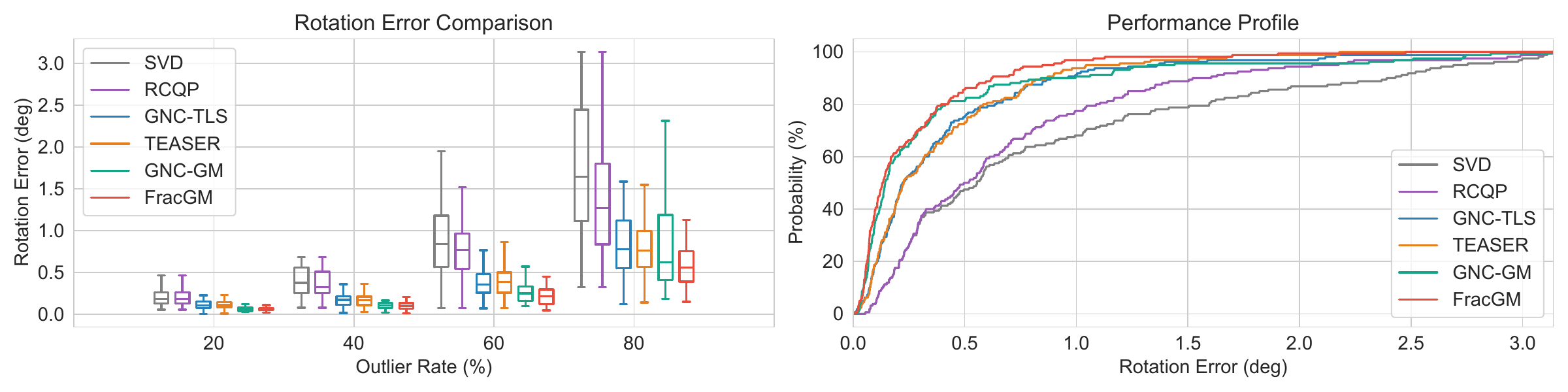}
        \vspace{-0.3cm}
        \caption{Rotation error comparison with outlier rates from 20-80\%.}
    \end{subfigure}

    \begin{subfigure}{\textwidth}
        \centering
        \includegraphics[width=0.98\textwidth]{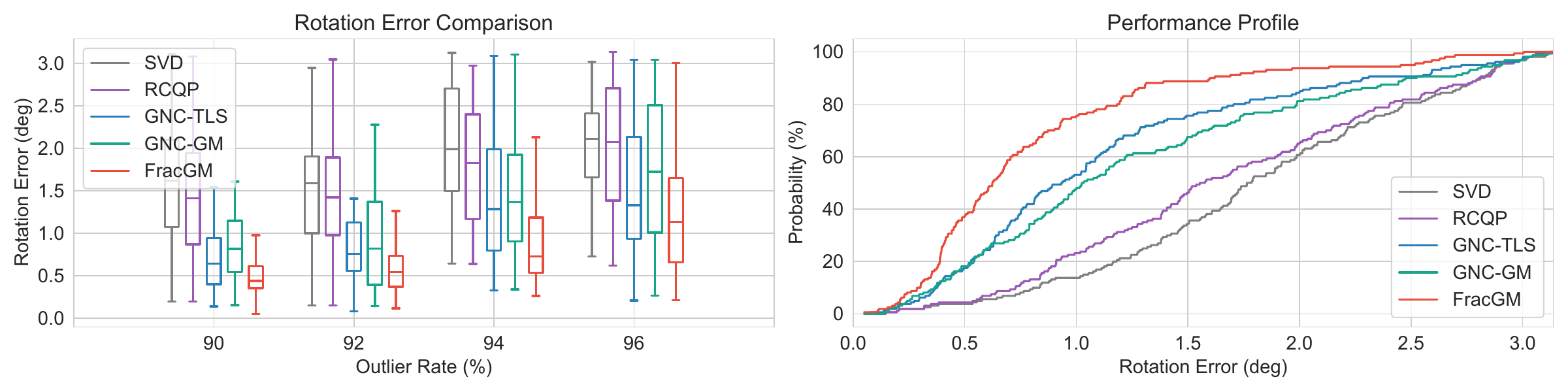}
        \vspace{-0.3cm}
        \caption{Rotation error comparison with more extreme outlier rates above 90\%.}
    \end{subfigure}
    
    \vspace{-0.1cm}

    \caption{Rotation error and Performance profile comparison with 
    (\romannum{1}) SVD~\cite{H87}, 
    (\romannum{2}) RCQP~\cite{BG17}, 
    (\romannum{3}) GNC-TLS~\cite{YAT20}, 
    (\romannum{4}) TEASER~\cite{YSC21}, 
    (\romannum{5}) GNC-GM~\cite{YAT20}, and 
    (\romannum{6}) FracGM.
    Note that TEASER cannot solve large size problems, thus it is not shown in the extreme outlier cases. 
    }
    \label{fig_rotation}
    \vspace{-0.6cm}
\end{figure*}

Figure \ref{fig_rotation} shows the box plot of error comparison and performance profile to evaluate rotation solvers. For outlier rates between 20\% and 80\%, FracGM has the lowest mean rotation error of $0.26^\circ$ and the highest probability of being the optimal solver than both outlier-free solvers and outlier-awareness solvers. FracGM stands out in extreme cases where the outlier rate is greater than 90\%.
In contrast to other solvers that can only limit up to 60\% of rotation errors within 1 degree, FracGM confines approximately 80\% of rotation errors within this range.

We also compare the error curve with TLS- and GM-based GNC~\cite{YAT20} in Figure~\ref{fig_converge}.
It reveals that FracGM not only has a better convergence speed than GNC approaches, but also a lower error.
We believe that GNC approximates the non-convex objective by some surrogate functions, in which such approximation might be far from the non-convex objective in the early iterations.
On the other hand, our solver reformulates the non-convex objective to an equivalent convex problem, thus we are solving the exact same objective in every iteration.
Thus, FracGM has a better ability to deal with outliers and can obtain a higher-quality estimation.

\vspace{-0.2cm}
\begin{figure}[htbp]
    \centering
    \includegraphics[width=\columnwidth]{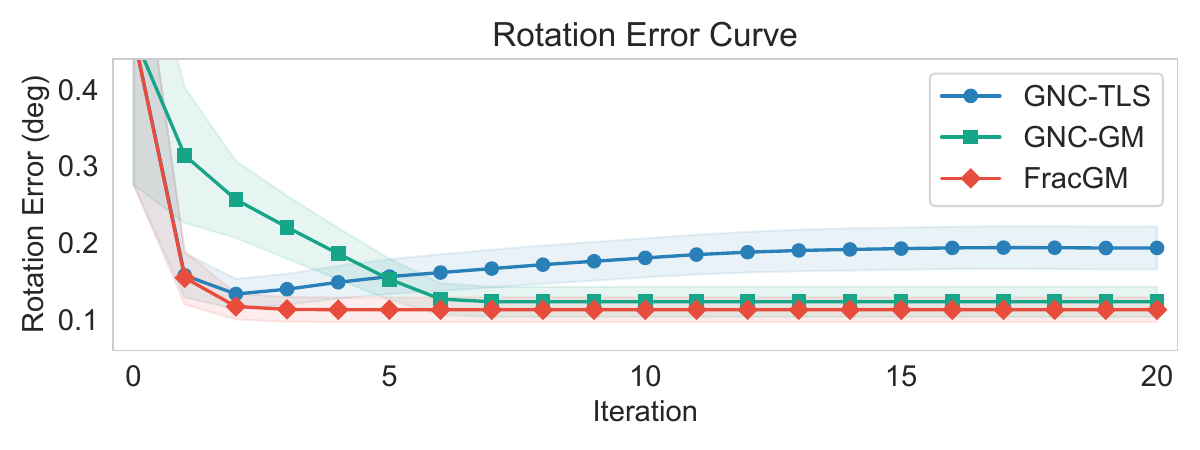}
    \vspace{-0.6cm}
    \caption{Convergence comparison between state-of-the-art iterative methods.}
    \label{fig_converge}
\end{figure}

\subsection{Point Cloud Registration with Synthetic Dataset}\label{sec_exp_reg}

We evaluate the proposed FracGM registration solver under two different scenarios.
In the first scenario, we down sample the Stanford Bunny point cloud to 500 points and manipulate different rates of outlier point correspondences from 20\% to 80\%, which is the same configuration of Section~\ref{rotation_estimation} with additional random translation $\boldsymbol{t}\in\mathbb{R}^3,\|\boldsymbol{t}\|\leq1$.
We set the noise bound \cite{YSC21} of TEASER++ and FracGM ($\sigma_i$ in Equation~\eqref{eq_gm_registration}) to 0.1, while RANSAC and FGR does not consider $\sigma_i$ in their problem formulation.
In the second scenario, we solve the problem given different noise bound guess to evaluate the impact of it. We believe that specifying the above two configurations are important in the registration task, since the error modeling of outlier in real-world applications is usually unknown and diverse.

We notice that FGR first filters bad correspondence by Fast Point Feature Histogram (FPFH) and nearest neighbors search, while TEASER++ has a Maximum Clique Inlier Selection (MCIS) step to prune outliers before estimating.
Both techniques serve as correspondence preprocessing, and since the main idea of \textit{robust estimator} is to deal with outliers by the solver itself, we exclude both steps in order to demonstrate the robustness of each solver.

\begin{figure*}[htbp]
    \centering
    \includegraphics[width=0.96\textwidth]{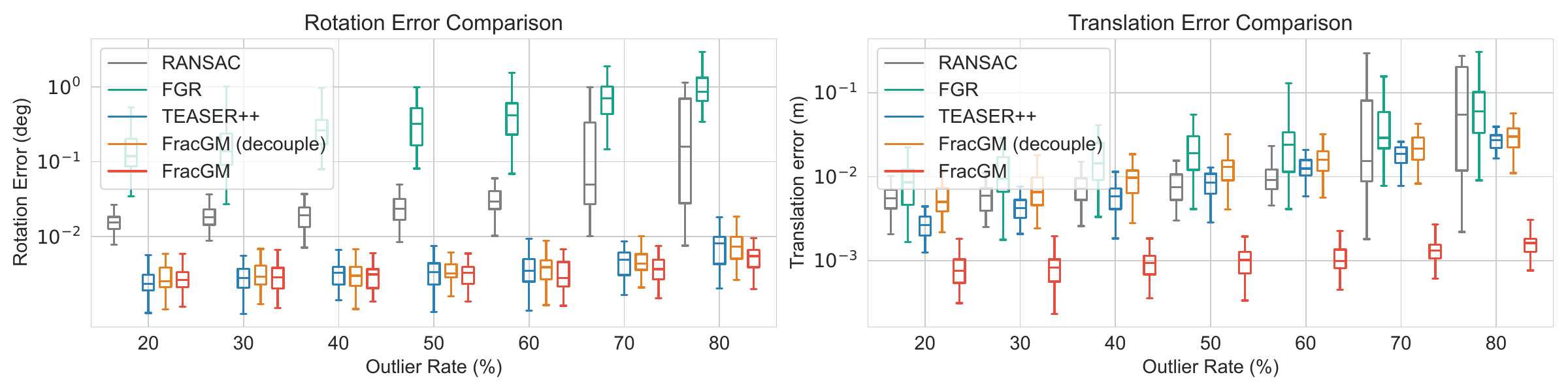}
    \vspace{-0.3cm}
    \caption{Performance comparison between state-of-the-art robust registration solvers in the synthetic dataset. FracGM is comparable to or better than other methods in terms of rotation and translation errors.}
    \label{fig_registration}
    \vspace{-0.36cm}
\end{figure*}

\begin{figure*}[htbp]
    \centering
    \includegraphics[width=0.96\textwidth]{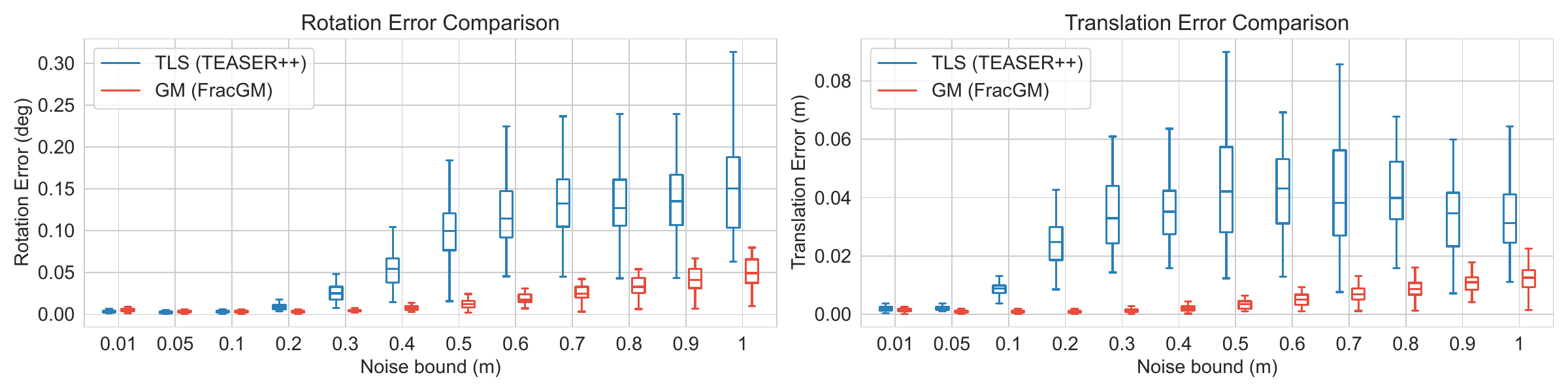}
    \vspace{-0.3cm}
    \caption{Performance comparison between TEASER++~\cite{YSC21} and FracGM registration solver with different noise bounds for the Stanford Bunny point cloud. Our solver is more insensitive to noise configuration with respect to error models, and sustains more accurate results than TLS-based approach.}
    \label{fig_sigma}
    \vspace{-0.6cm}
\end{figure*}

Figure~\ref{fig_bunny} demonstrates an example of this experiment. We run RANSAC for 100 times, which is a multi-times non-robust solver, while others are one-time robust solvers. Figure \ref{fig_registration} reports that FracGM is the only one that is truly robust to outliers and can really work well with larger outlier rate without any correspondence preprocessing or outlier pruning. 
As the outlier rate increased from 20\% to 80\%, the rotation and translation error of TEASER++ escalated by 186\% and 900\%, while FracGM increased only by 87\% and 106\%, representing an improvement of 53\% and 88\%, respectively.
As stated in Section~\ref{sec_fgm_reg}, we also extended a FracGM translation solver to adapt the decoupled structure of TEASER++. Figure~\ref{fig_registration} shows that the results of rotation estimation in FracGM (decouple) is similar to FracGM, while the accuracy of translation estimation in FracGM (decouple) falls. This is because the measurements for the decoupled translation solver are affected by the prior rotation estimation, and fixing a rotation matrix may cause the translation solver converge to some local solution. Table~\ref{table_time} reports that the computation time of FracGM (decouple) increases exponentially due to the fact that the measurements for the decoupled rotation solver also increases exponentially, while FracGM can deal with thousands of measurements with milliseconds.

\begin{table}[htbp]
    \vspace{-0.2cm}
    \centering
    \caption{Time comparison between FracGM and FracGM (decouple).}
    \begin{adjustbox}{width=0.48\textwidth}
    \begin{tabular}{lccccc}
    \\[-1.2em]\Xhline{2\arrayrulewidth}\\[-1.em]
    & \multicolumn{5}{c}{Average Computation Time (sec)}
    \\
    $N$ & 100 & 500 & 1000 & 2000 & 5000
    \\
    \\[-1.2em]\hline\\[-1.em]
    FracGM (decouple)
    & 0.0108 & 0.3891 & 1.6401 & 6.8731 & 46.1361\\
    FracGM
    & 0.0003 & 0.0012 & 0.0024 & 0.0053 & 0.0137\\
    \\[-1.2em]\Xhline{2\arrayrulewidth}
    \end{tabular}
    \end{adjustbox}
    \label{table_time}
    \vspace{-0.4cm}
\end{table}

In the above experiments, we choose a noise bound (the upper bound of $\sigma_i$ in Equation~\eqref{eq_gm_registration}) of 0.1 that is larger than the noise standard deviation of the data set at $0.01$. However, we may not know the actual noise of the measurements in reality, and thus we tend to set a larger noise bound guess. Figure~\ref{fig_sigma} reports the result of setting different noise bounds on a fixed dataset with 50\% of outliers, while the other settings are the same as before. It could be expected that the estimation will be more accurate if the noise bound is closer to the actual noise standard deviation. Despite this, the estimation error of TEASER++ increases dramatically.
For example, given a noise bound of 1 that is 100 times the noise standard deviation, FracGM has a 66\% and 62\% improvement over TEASER++ in terms of rotation and translation error.

\subsection{Point Cloud Registration with 3DMatch Dataset}\label{sec_exp_reg_3dmatch}

We test our proposed FracGM registration solver on the realistic 3DMatch dataset \cite{ZSN17} to examine the real-world application capability. 
We downsample point clouds with a voxel size of 5 cm, obtain the correspondence by FPFH, perform a nearest-neighbor search to generate sets of point correspondences and filter them with MCIS~\cite{YSC21}. 

Table~\ref{table3dmatch} exhibits the registration accuracy and computational time between our solver and other state-of-the-art methods (RANSAC~\cite{RANSAC}, FGR~\cite{ZPK16} and TEASER++~\cite{YSC21}). 
FracGM has the lowest average rotation error within $1.22^{\circ}$ and an average translation error less than 1.7 m. 
Since our solver converges fast and we only solve a fixed low-dimensional quadratic program and a linear equation system during each iteration, FracGM can be executed in real-time with an average of 25 milliseconds each registration.
In terms of rotation and translation errors across 9 tested scenes, FracGM achieves the best results in 13 out of 18 outcomes, 
while having a 19.43\% improvement of computation time compared to TEASER++.

\begin{table*}[t]
    \centering
    \caption{Performance comparison between RANSAC, FGR, TEASER++, and FracGM on the 3DMatch dataset. Note that RANSAC runs over 10 thousand times for each registration, and noise bounds of TEASER++ and FracGM are set to 0.1m.}
    \vspace{-0.1cm}
    \begin{adjustbox}{width=1\textwidth}
    \begin{tabular}{llcccccccccc}
    \\[-1.2em]\Xhline{2\arrayrulewidth}\\[-1.em]
    Scenes & Errors 
    & Kitchen & Home 1 & Home 2  & Hotel 1 & Hotel 2 & Hotel 3 & Study  & MIT Lab & Average & Time (s)\\
    \\[-1.2em]\hline\\[-1.em]
    \multirow{2}{*}{RANSAC-10K} 
    & Rotation (deg)  &    1.248 &    1.163 &    1.375 &    1.183 &    1.525 &    1.393 &    1.107 &    1.027 &    1.253 & \multirow{2}{*}{0.0151}\\
    & Translation (m) &    2.053 &    1.723 &    2.180 &    1.984 &    1.870 &    0.991 &    1.770 &    1.753 &    1.791 & \\
    \\[-1.2em]\hline\\[-1.em]
    \multirow{2}{*}{FGR \cite{ZPK16}}  
    & Rotation (deg)  &    1.275 &    1.206 &\bf{1.133}&    1.232 &    1.555 &    1.491 &    1.203 &    1.065 &    1.270 & \multirow{2}{*}{0.0373}\\
    & Translation (m) &    2.041 &    1.767 &\bf{2.007}&    2.068 &    2.031 &    1.224 &    1.911 &    1.888 &    1.867 & \\
    \\[-1.2em]\hline\\[-1.em]
    \multirow{2}{*}{TEASER++ \cite{YSC21}} 
    & Rotation (deg)  &    1.225 &\bf{1.146}&    1.357 &    1.177 &    1.525 &\bf{1.349}&    1.034 &    0.960 &    1.221 & \multirow{2}{*}{0.0313}\\
    & Translation (m) &    2.044 &    1.691 &    2.131 &    1.984 &    1.864 &    1.014 &    1.674 &\bf{1.575}&    1.747 & \\
    \\[-1.2em]\hline\\[-1.em]
    \multirow{2}{*}{FracGM} 
    & Rotation (deg)  &\bf{1.219}&    1.154 &    1.365 &\bf{1.168}&\bf{1.463}&    1.368 &\bf{1.021}&\bf{0.959}&\bf{1.215}& \multirow{2}{*}{0.0252}\\
    & Translation (m) &\bf{1.989}&\bf{1.629}&    2.134 &\bf{1.935}&\bf{1.745}&\bf{0.926}&\bf{1.566}&    1.592 &\bf{1.689}& \\
    \\[-1.2em]\Xhline{2\arrayrulewidth}
    \end{tabular}
    \end{adjustbox}
    \label{table3dmatch}
    \vspace{-0.6cm}
\end{table*}

\section{Conclusion}\label{sec_conclusion}
In this paper, we introduce FracGM, a fast algorithm for Geman-McClure robust estimation.
We revisit and revise the theoretical part of existing fractional programming techniques to accommodate the case of using Geman-McClure robust function.
In particular, we prove that the convexity of the dual problem is guaranteed during the iteration process.
It not only establishes the validity of our solver, but also allows us to study the global optimality conditions.
We demonstrate two FracGM-based solvers for real-world spatial perception problems, and empirical studies indicate that our solvers presents strong outlier rejection capability, good resilience to different error distributions, and fast computation speed.
In the future, we would like to investigate theoretical conditions that guarantee global optimality of FracGM-based solvers on real-world applications.

\section*{Supplementary Material}

Code and complementary documents are available at 
\url{https://github.com/StephLin/FracGM}.

\balance

\bibliographystyle{IEEEtran}

\begin{thebibliography}{10}


\bibitem{CG13}
A.~Chatterjee and V.~M. Govindu, ``Efficient and robust large-scale rotation averaging,'' in \emph{{ICCV}}.\hskip 1em plus 0.5em minus 0.4em\relax {IEEE} Computer Society, 2013, pp. 521--528.

\bibitem{CG17}
------, ``Robust relative rotation averaging,'' \emph{{IEEE} Trans. Pattern Anal. Mach. Intell.}, vol.~40, no.~4, pp. 958--972, 2018.

\bibitem{HAT11}
R.~I. Hartley, K.~Aftab, and J.~Trumpf, ``{L1} rotation averaging using the weiszfeld algorithm,'' in \emph{{CVPR}}.\hskip 1em plus 0.5em minus 0.4em\relax {IEEE} Computer Society, 2011, pp. 3041--3048.

\bibitem{WS13}
L.~Wang and A.~Singer, ``Exact and stable recovery of rotations for robust synchronization,'' \emph{Information and Inference: A Journal of the IMA}, vol.~2, no.~2, pp. 145--193, 2013.

\bibitem{DYD21}
Z.~Deng, Y.~Yao, B.~Deng, and J.~Zhang, ``A robust loss for point cloud registration,'' in \emph{{ICCV}}.\hskip 1em plus 0.5em minus 0.4em\relax {IEEE}, 2021, pp. 6118--6127.

\bibitem{FA03}
A.~W. Fitzgibbon, ``Robust registration of 2d and 3d point sets,'' \emph{Image Vis. Comput.}, vol.~21, no. 13-14, pp. 1145--1153, 2003.

\bibitem{LDH19}
H.~M. Le, T.~Do, T.~Hoang, and N.~Cheung, ``{SDRSAC:} semidefinite-based randomized approach for robust point cloud registration without correspondences,'' in \emph{{CVPR}}.\hskip 1em plus 0.5em minus 0.4em\relax Computer Vision Foundation / {IEEE}, 2019, pp. 124--133.

\bibitem{MZT13}
J.~Ma, J.~Zhao, J.~Tian, Z.~Tu, and A.~L. Yuille, ``Robust estimation of nonrigid transformation for point set registration,'' in \emph{{CVPR}}.\hskip 1em plus 0.5em minus 0.4em\relax {IEEE} Computer Society, 2013, pp. 2147--2154.

\bibitem{YSC21}
H.~Yang, J.~Shi, and L.~Carlone, ``{TEASER:} fast and certifiable point cloud registration,'' \emph{{IEEE} Trans. Robotics}, vol.~37, no.~2, pp. 314--333, 2021.

\bibitem{CC18}
L.~Carlone and G.~C. Calafiore, ``Convex relaxations for pose graph optimization with outliers,'' \emph{{IEEE} Robotics Autom. Lett.}, vol.~3, no.~2, pp. 1160--1167, 2018.

\bibitem{LFP13}
G.~H. Lee, F.~Fraundorfer, and M.~Pollefeys, ``Robust pose-graph loop-closures with expectation-maximization,'' in \emph{{IROS}}.\hskip 1em plus 0.5em minus 0.4em\relax {IEEE}, 2013, pp. 556--563.

\bibitem{SP12}
N.~S{\"{u}}nderhauf and P.~Protzel, ``Towards a robust back-end for pose graph {SLAM},'' in \emph{{ICRA}}.\hskip 1em plus 0.5em minus 0.4em\relax {IEEE}, 2012, pp. 1254--1261.

\bibitem{JMM23}
J.~Beuchert, M.~Camurri, and M.~F. Fallon, ``Factor graph fusion of raw {GNSS} sensing with {IMU} and lidar for precise robot localization without a base station,'' in \emph{{ICRA}}.\hskip 1em plus 0.5em minus 0.4em\relax {IEEE}, 2023, pp. 8415--8421.

\bibitem{JS24}
T.~Suzuki, ``Attitude-estimation-free {GNSS} and {IMU} integration,'' \emph{{IEEE} Robotics Autom. Lett.}, vol.~9, no.~2, pp. 1090--1097, 2024.

\bibitem{BAG16}
M.~Bosse, G.~Agamennoni, and I.~Gilitschenski, ``Robust estimation and applications in robotics,'' \emph{Found. Trends Robotics}, vol.~4, no.~4, pp. 225--269, 2016.

\bibitem{LWY19}
L.~Lu, W.~Wang, X.~Yang, W.~Wu, and G.~Zhu, ``Recursive geman-mcclure estimator for implementing second-order volterra filter,'' \emph{{IEEE} Trans. Circuits Syst. {II} Express Briefs}, vol. 66-II, no.~7, pp. 1272--1276, 2019.

\bibitem{YAT20}
H.~Yang, P.~Antonante, V.~Tzoumas, and L.~Carlone, ``Graduated non-convexity for robust spatial perception: From non-minimal solvers to global outlier rejection,'' \emph{{IEEE} Robotics Autom. Lett.}, vol.~5, no.~2, pp. 1127--1134, 2020.

\bibitem{Z14}
C.~Zach, ``Robust bundle adjustment revisited,'' in \emph{{ECCV}}.\hskip 1em plus 0.5em minus 0.4em\relax Springer, 2014, pp. 772--787.

\bibitem{FM17}
F.~Dellaert and M.~Kaess, ``Factor graphs for robot perception,'' \emph{Found. Trends Robotics}, vol.~6, no. 1-2, pp. 1--139, 2017.

\bibitem{RKL14}
D.~M. Rosen, M.~Kaess, and J.~J. Leonard, ``{RISE:} an incremental trust-region method for robust online sparse least-squares estimation,'' \emph{{IEEE} Trans. Robotics}, vol.~30, no.~5, pp. 1091--1108, 2014.

\bibitem{CP15}
M.~Castella and J.~Pesquet, ``Optimization of a geman-mcclure like criterion for sparse signal deconvolution,'' in \emph{{CAMSAP}}.\hskip 1em plus 0.5em minus 0.4em\relax {IEEE}, 2015, pp. 309--312.

\bibitem{YC19}
H.~Yang and L.~Carlone, ``A quaternion-based certifiably optimal solution to the wahba problem with outliers,'' in \emph{{ICCV}}.\hskip 1em plus 0.5em minus 0.4em\relax {IEEE}, 2019, pp. 1665--1674.

\bibitem{D67}
W.~Dinkelbach, ``On nonlinear fractional programming,'' \emph{Management Science}, vol.~13, no.~7, pp. 492--498, 1967.

\bibitem{J12}
Y.~Jong, ``An efficient global optimization algorithm for nonlinear sum-of-ratios problem,'' \emph{Optimization Online}, vol.~32, pp. 38--39, 2012.

\bibitem{S97}
I.~M. Stancu-Minasian, \emph{Fractional programming: theory, methods and applications}, 1997, vol. 409.

\bibitem{SS03}
S.~Schaible and J.~Shi, ``Fractional programming: The sum-of-ratios case,'' \emph{Optim. Methods Softw.}, vol.~18, no.~2, pp. 219--229, 2003.

\bibitem{W65}
G.~Wahba, ``A least squares estimate of satellite attitude,'' \emph{SIAM review}, vol.~7, no.~3, pp. 409--409, 1965.

\bibitem{MB15}
K.~MacTavish and T.~D. Barfoot, ``At all costs: {A} comparison of robust cost functions for camera correspondence outliers,'' in \emph{{CRV}}.\hskip 1em plus 0.5em minus 0.4em\relax {IEEE} Computer Society, 2015, pp. 62--69.

\bibitem{BR96}
M.~J. Black and A.~Rangarajan, ``On the unification of line processes, outlier rejection, and robust statistics with applications in early vision,'' \emph{Int. J. Comput. Vis.}, vol.~19, no.~1, pp. 57--91, 1996.

\bibitem{B19}
J.~T. Barron, ``A general and adaptive robust loss function,'' in \emph{{CVPR}}.\hskip 1em plus 0.5em minus 0.4em\relax Computer Vision Foundation / {IEEE}, 2019, pp. 4331--4339.

\bibitem{ZPK16}
Q.~Zhou, J.~Park, and V.~Koltun, ``Fast global registration,'' in \emph{{ECCV}}.\hskip 1em plus 0.5em minus 0.4em\relax Springer, 2016, pp. 766--782.

\bibitem{FDJ16}
F.~Bugarin, D.~Henrion, and J.~Lasserre, ``Minimizing the sum of many rational functions,'' \emph{Math. Program. Comput.}, vol.~8, no.~1, pp. 83--111, 2016.

\bibitem{H87}
B.~K.~P. Horn, ``Closed-form solution of absolute orientation using unit quaternions,'' \emph{Journal of the Optical Society A}, vol.~4, pp. 629--642, 04 1987.

\bibitem{OE08}
C.~Olsson and A.~P. Eriksson, ``Solving quadratically constrained geometrical problems using lagrangian duality,'' in \emph{{ICPR}}.\hskip 1em plus 0.5em minus 0.4em\relax {IEEE} Computer Society, 2008, pp. 1--5.

\bibitem{BG17}
J.~Briales and J.~G. Jim{\'{e}}nez, ``Convex global 3d registration with lagrangian duality,'' in \emph{{CVPR}}.\hskip 1em plus 0.5em minus 0.4em\relax {IEEE} Computer Society, 2017, pp. 5612--5621.

\bibitem{FJ01}
R.~W. Freund and F.~Jarre, ``Solving the sum-of-ratios problem by an interior-point method,'' \emph{J. Glob. Optim.}, vol.~19, no.~1, pp. 83--102, 2001.

\bibitem{BV04}
S.~Boyd and L.~Vandenberghe, \emph{Convex optimization}.\hskip 1em plus 0.5em minus 0.4em\relax Cambridge university press, 2004.

\bibitem{S66}
P.~H. Sch{\"o}nemann, ``A generalized solution of the orthogonal procrustes problem,'' \emph{Psychometrika}, vol.~31, no.~1, pp. 1--10, 1966.

\bibitem{ZSN17}
A.~Zeng, S.~Song, M.~Nie{\ss}ner, M.~Fisher, J.~Xiao, and T.~A. Funkhouser, ``3dmatch: Learning local geometric descriptors from {RGB-D} reconstructions,'' in \emph{{CVPR}}.\hskip 1em plus 0.5em minus 0.4em\relax {IEEE} Computer Society, 2017, pp. 199--208.

\bibitem{GM94}
G.~Turk and M.~Levoy, ``Zippered polygon meshes from range images,'' in \emph{{SIGGRAPH}}.\hskip 1em plus 0.5em minus 0.4em\relax {ACM}, 1994, pp. 311--318.

\bibitem{FPFH}
R.~B. Rusu, N.~Blodow, and M.~Beetz, ``Fast point feature histograms {(FPFH)} for 3d registration,'' in \emph{{ICRA}}.\hskip 1em plus 0.5em minus 0.4em\relax {IEEE}, 2009, pp. 3212--3217.

\bibitem{RANSAC}
M.~A. Fischler and R.~C. Bolles, ``Random sample consensus: {A} paradigm for model fitting with applications to image analysis and automated cartography,'' \emph{Commun. {ACM}}, vol.~24, no.~6, pp. 381--395, 1981.

\end{thebibliography}

\end{document}